\pdfoutput=1

\documentclass{article}
\pdfpagewidth=8.5in
\pdfpageheight=11in

\usepackage{ijcai20}

\usepackage{times}
\usepackage{soul}
\usepackage{url}
\usepackage[hidelinks]{hyperref}
\usepackage[utf8]{inputenc}
\usepackage[small]{caption}
\usepackage{graphicx, xcolor}
\usepackage{amsmath, amsthm, amssymb}
\usepackage{mathtools}
\usepackage{booktabs}
\usepackage{algorithm, algorithmic}
\usepackage[capitalize]{cleveref}
\usepackage{bm}
\usepackage{dsfont}
\usepackage{float}
\urlstyle{same}

\allowdisplaybreaks

\title{Individual Fairness Revisited:\\ Transferring Techniques from Adversarial Robustness}

\author{
	Samuel Yeom\And
	Matt Fredrikson
	\affiliations
	Carnegie Mellon University
	\emails
	\{syeom, mfredrik\}@cs.cmu.edu
}


\newtheorem{definition}{Definition}
\newtheorem{theorem}{Theorem}
\newtheorem{lemma}[theorem]{Lemma}

\DeclareMathOperator*{\argmax}{argmax}

\makeatletter
\newcommand{\citeyearpar}[2][]{%
	\def\tmp{#1}%
	\ifx\tmp\@empty
		\citeauthor{#2}~[\citeyear{#2}]%
	\else
		\citeauthor{#2}~[\citeyear[#1]{#2}]%
	\fi
}
\makeatother


\newcommand{\X}{\mathcal{X}}
\newcommand{\Y}{\mathcal{Y}}
\newcommand{\M}{\mathcal{M}}
\newcommand{\R}{\mathbb{R}}
\newcommand{\E}{\mathbb{E}}
\newcommand{\x}{\bm{x}}
\renewcommand{\t}{\bm{t}}
\newcommand{\e}{\bm{e}}
\newcommand{\beps}{\bm{\epsilon}}
\newcommand{\bsigma}{\bm{\Sigma}}

\newcommand{\fstar}{f^\star}
\newcommand{\sgn}{\mathrm{sgn}}
\newcommand{\Var}{\mathrm{Var}}
\newcommand{\one}{\mathds{1}}
\newcommand{\binset}{\{0, 1\}}

\newcommand{\Mall}{\M_{\mathrm{all}}}
\newcommand{\ML}{\M_L}
\newcommand{\dX}{D_\X}
\newcommand{\dY}{D_\Y}
\newcommand{\ddY}{D_{\Delta(\Y)}}
\newcommand{\Lp}{L^p}
\newcommand{\smodel}{h_{f,g}} 
\newcommand{\smodelstar}{h_{\fstar,g}}
\newcommand{\sampmodel}{h_{f,g}^{n}}
\newcommand{\diff}{\mathrm{diff}}

\hyphenation{Pro-Pub-li-ca}

\newif\ifarxiv\arxivtrue

\begin{document}

\maketitle


\begin{abstract}
We turn the definition of individual fairness on its head---rather than ascertaining the fairness of a model given a predetermined metric, we find a metric for a given model that satisfies individual fairness.
This can facilitate the discussion on the fairness of a model, addressing the issue that it may be difficult to specify a priori a suitable metric.
Our contributions are twofold:
First, we introduce the definition of a \emph{minimal} metric and characterize the behavior of models in terms of minimal metrics.
Second, for more complicated models, we apply the mechanism of randomized smoothing from adversarial robustness to make them individually fair under a given weighted $\Lp$ metric.
Our experiments show that adapting the minimal metrics of linear models to more complicated neural networks can lead to meaningful and interpretable fairness guarantees at little cost to utility.
\end{abstract}

\section{Introduction}
When machine learning models are deployed to make predictions about people, it is important that the model treats individuals fairly.
\emph{Individual fairness}~\cite{dwork2012fairness} captures the notion that similar people should be treated similarly by imposing a continuity requirement on models.
However, this raises the difficult societal question of how to define which people are ``similar''.

We start in \cref{sec:minimal} from the insight that it may be easier to determine whether a given similarity metric is reasonable than it is to construct one from scratch.
Thus, rather than imposing individual fairness with a predetermined similarity metric, we find a metric that corresponds to the behavior of a given model, which can then guide the discussion on whether the model is fair.
To facilitate this, we introduce the notion of a \emph{minimal} fairness metric, and show that in many cases there exists a unique metric that best characterizes the behavior of a given model for this purpose.

In \cref{sec:smoothing}, we deal with more complicated models, such as deep neural networks, whose minimal metrics are not easily computable.
We show that we can make \emph{any} model provably individually fair by post-processing it with \emph{randomized smoothing}~\cite{cohen2019certified} to impose a given weighted $\Lp$ metric.
As randomized smoothing was originally applied as a defense against adversarial examples, our result brings to light the connection between individual fairness and adversarial robustness.
However, our theorems are in a sense stronger because individual fairness is a uniform requirement that applies to all points in the input space, whereas the certified threshold of Cohen \textit{et al.}\ is a function of the input point.
Our Laplace and Gaussian smoothing mechanisms are versatile in that they can make a model provably individually fair under any given weighted $\Lp$ metric, and we show the minimality of this metric for the smoothed model to argue that we do not add more noise than is necessary.

Finally, our experiments combine the two main elements of our paper---we smooth neural networks to be individually fair under a metric that is proportional to the minimal metrics of linear models trained on the same datasets.
Our results on four real datasets show that the neural networks smoothed with Gaussian noise in particular are often approximately as accurate as the original models.
Moreover, we can achieve models with similar favorable individual fairness guarantees to those of linear models while still enjoying the increased predictive accuracy enabled by the neural network.

\section{Related Work}
\citeyearpar{dwork2012fairness} introduced the definition of individual fairness, which contrasts with group-based notions of fairness~\cite{hardt2016equality,zafar2017fairness-www} that require demographic groups to be treated similarly on average.
Motivated in part by group fairness, \citeyearpar{zemel2013learning} learn a representation of the data that excludes information about a protected attribute, such as race or gender, whose use is often legally prohibited.
This work has spurred more research on fair representations~\cite{calmon2017optimized,madras2018learning,tan2019learning}, and the resulting representations implicitly define a similarity metric.
However, unlike the weighted $\Lp$ metrics that we use, these metrics are harder for humans to interpret and are primarily designed to attain group fairness.

Others approximate individual fairness based on a limited number of oracle queries, which represent human judgments, about whether pair of individuals is similar.
\citeyearpar{gillen2018online} attempt to learn a similarity metric that is consistent with the human judgments in the setting of online linear contextual bandits.
In a more general setting, \citeyearpar{ilvento2019metric} derives an approximate metric using comparison queries that ask which of two individuals a given third individual is more similar to.
Finally, \citeyearpar{jung2019eliciting} apply constrained optimization directly without assuming that the human judgments are consistent with a metric.

By contrast, we post-process a model using randomized smoothing to provably ensure individual fairness.
\citeyearpar{cohen2019certified} previously analyzed randomized smoothing in the context of adversarial robustness.
In the context of fairness, most post-processing approaches~\cite{hardt2016equality,canetti2019soft} do not take individual fairness into account, and although \citeyearpar{lohia2019bias} consider individual fairness, they define two individuals to be similar if and only if they differ only in the pre-specified protected attribute.

\section{Background}
In this section, we present the definitions and notation that we will use throughout the paper.
\begin{definition}[Distance metric] \label{def:metric}
A nonnegative function $D: \X \times \X \to \R$ is a \emph{distance metric} in $\X$ if it satisfies the following three conditions: nonnegativity, symmetry, and triangle inequality.
\end{definition}
In common mathematical usage, \cref{def:metric} is a pseudometric, and metrics must also satisfy the condition that $D(x_1, x_2) = 0$ if and only if $x_1 = x_2$.
However, throughout this paper we will refer to pseudometrics as metrics, following the convention in the field of metric learning.

One commonly used family of metrics is the standard $\Lp$ metric, which is defined over $\R^d$.
In this paper, we consider a more general family of metrics that allows each coordinate to be weighted differently.
\begin{definition}[Weighted $\Lp$ metric] \label{def:lp}
The \emph{weighted $\Lp$ metric}, with $p \ge 1$ and weights $w_i \ge 0$, is a distance metric in $\R^d$ that is defined by the equation
\begin{equation} \label{eqn:lp}
D(\x_1, \x_2) = \sqrt[p]{\textstyle\sum_{i=1}^d w_i \cdot |x_{1i} - x_{2i}|^p},
\end{equation}
where $x_{1i}$ and $x_{2i}$ are the $i$-th coordinates of $\x_1$ and $\x_2$, respectively.
\end{definition}
We place the restriction that $p \ge 1$ because otherwise the function $D$ does not satisfy the triangle inequality.
When $w_i=1$ for all $i$, we have the \emph{standard $\Lp$ metric}.

Throughout this paper, we will use $\X$ and $\Y$ to denote a model's input and output spaces, respectively.
Moreover, we will assume a distance metric $\dY: \Y \times \Y \to \R$ that characterizes how close two points in the output space are.
\begin{definition}[Individual fairness~\cite{dwork2012fairness}] \label{def:indivfair}
A model $h: \X \to \Y$ is \emph{individually fair} under metric $\dX: \X \times \X \to \R$ if, for all $x_1, x_2 \in \X$,
\begin{equation} \label{eqn:indivfair}
\dY(h(x_1), h(x_2)) \le \dX(x_1, x_2).
\end{equation}
\end{definition}
Individual fairness captures the intuition that the model should not behave arbitrarily.
In particular, it formalizes the notion that similar individuals should be treated similarly, i.e., given two individuals $x_1, x_2 \in \X$, if the distance $\dX(x_1, x_2)$ between them is small, then the distance $\dY(h(x_1), h(x_2))$ between the outputs of the model on these individuals should also be small.

\section{Minimal Distance Metric} \label{sec:minimal}
One criticism of individual fairness is that it is difficult to apply in practice because it requires one to specify the metric $\dX$~\cite{chouldechova2018frontiers}.
The choice of a metric in $\X$ dictates which individuals should be considered similar, which is highly context-dependent and often controversial.
Thus, we take a slightly different approach---rather than specifying a metric $\dX$ and asking whether a model is individually fair under that metric, we find one metric under which the model is individually fair.
Then, we can reason about whether the metric is appropriate for the task at hand.

However, there could be multiple metrics for which a model is individually fair.
In fact, if $\dX(x_1, x_2) \ge \dX'(x_1, x_2)$ for all $x_1, x_2 \in \X$, then any model that is individually fair under $\dX'$ is also fair under $\dX$, as the metrics are simply upper bounds on the extent to which a model's outputs can vary.
On the other hand, our goal is to characterize the behavior of a model, for which we need a \emph{tight} upper bound.
This notion of tightness is captured by the minimality of a distance metric, defined in \cref{def:minimal}.

\begin{definition}[Minimal distance metric] \label{def:minimal}
Let $\M$ be a set of distance metrics in $\X$.
A metric $\dX \in \M$ is \emph{minimal} in $\M$ with respect to model $h: \X \to \Y$ if \emph{(1)} $h$ is individually fair under $\dX$, and \emph{(2)} there does not exist a different $\dX' \in \M$ such that $h$ is individually fair under $\dX'$ and $\dX(x_1, x_2) \ge \dX'(x_1, x_2)$ for all $x_1, x_2 \in \X$.
\end{definition}

To see how one may reason about the minimal distance metric, consider a hiring model with a binary output that informs whether a given applicant should be hired.
A natural $\dY$ in this setting is the 0-1 loss $\dY(y_1, y_2) = \one[y_1 \neq y_2]$.
Then, if the hiring model satisfies individual fairness under a metric $\dX$ such that $\dX(x_1, x_2) = 0$ whenever $x_1$ and $x_2$ differ only in race, we can reason that it does not directly use race to discriminate.

We now present \cref{thm:allmetrics}, which identifies the unique minimal metric among the set of all metrics.
\begin{theorem} \label{thm:allmetrics}
Let $h: \X \to \Y$ be a model, and let $\Mall$ be the set of all metrics that satisfy the conditions in \cref{def:metric}.
Then, the metric $\dX$, defined as
\begin{equation} \label{eqn:allmetrics}
\dX(x_1, x_2) = \dY(h(x_1), h(x_2))
\end{equation}
for all $x_1, x_2 \in \X$, is the unique minimal metric in $\Mall$ with respect to $h$.
\end{theorem}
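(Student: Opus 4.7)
The plan is to split the proof into four short steps: verify that the function $\dX$ defined by \eqref{eqn:allmetrics} is a bona fide metric, confirm that $h$ is individually fair under it, establish minimality, and then deduce uniqueness. Each step should follow more or less immediately from unpacking the relevant definition, so I do not expect a genuine obstacle; the central observation, made once and reused throughout, is that individual fairness forces any competing metric to dominate $\dX$ pointwise.

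First I would show $\dX \in \Mall$ by noting that all three properties in \cref{def:metric} transfer verbatim from $\dY$ through the definition $\dX(x_1, x_2) = \dY(h(x_1), h(x_2))$: nonnegativity and symmetry are immediate, and the triangle inequality on $\dX$ at any triple $x_1, x_2, x_3$ reduces to the triangle inequality on $\dY$ at $h(x_1), h(x_2), h(x_3)$. Individual fairness of $h$ under $\dX$ is then literally the equality case of \eqref{eqn:indivfair}.

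The key step is the following observation, which I would state and prove once. If $\dX' \in \Mall$ is any metric under which $h$ is individually fair, then by \eqref{eqn:indivfair}, $\dX'(x_1, x_2) \ge \dY(h(x_1), h(x_2)) = \dX(x_1, x_2)$ for all $x_1, x_2 \in \X$. Thus $\dX'$ dominates $\dX$ pointwise. Minimality of $\dX$ now follows by contradiction: if some different $\dX' \in \Mall$ made $h$ individually fair and satisfied $\dX \ge \dX'$ pointwise, we would combine this with the key inequality $\dX' \ge \dX$ to conclude $\dX' = \dX$, contradicting $\dX' \neq \dX$.

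Finally, for uniqueness, suppose $\dX''$ is any minimal metric in $\Mall$ with respect to $h$. The key inequality gives $\dX'' \ge \dX$ pointwise. If $\dX'' \neq \dX$, then $\dX$ is a different metric in $\Mall$ under which $h$ is individually fair (by the second step) and which satisfies $\dX'' \ge \dX$ pointwise, contradicting the minimality of $\dX''$. Hence $\dX'' = \dX$, completing the proof.
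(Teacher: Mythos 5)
Your proof is correct and takes essentially the same approach as the paper's: verify $\dX$ is a metric, note that equality in the individual-fairness inequality holds, then argue minimality and uniqueness by contradiction. The only difference is stylistic---you explicitly isolate the observation that any fair metric must dominate $\dX$ pointwise and reuse it for both minimality and uniqueness, whereas the paper arrives at the same conclusions via a slightly more verbose case analysis; the logical content is the same.
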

\begin{proof}
We first prove that $\dX$ is a minimal metric, and later we will prove that no other metric is minimal.
Since we assume $\dY$ to be a metric, it easily follows that $\dX$ is also a metric under \cref{def:metric}.
Moreover, the equality in \cref{eqn:indivfair} always holds by our definition of $\dX$, so $h$ is individually fair under $\dX$.
Thus, it remains to show that there does not exist a different $\dX' \in \Mall$ such that $h$ is individually fair under $\dX'$ and $\dX(x_1, x_2) \ge \dX'(x_1, x_2)$ for all $x_1, x_2 \in \X$.

Suppose such $\dX'$ exists.
Since $\dX' \neq \dX$, there must exist some $x_1, x_2 \in \X$ such that $\dX(x_1, x_2) > \dX'(x_1, x_2)$. This, combined with \cref{eqn:allmetrics}, contradicts our assumption that $h$ is individually fair under $\dX'$.

Now we prove that $\dX$ is the unique minimal metric, arguing that $\dX'$ cannot be minimal if $\dX' \neq \dX$.
If there exist $x_1, x_2 \in \X$ such that $\dX(x_1, x_2) > \dX'(x_1, x_2)$, then $h$ is not individually fair under $\dX'$.
Thus, we must have $\dX'(x_1, x_2) \ge \dX(x_1, x_2)$ for all $x_1, x_2 \in \X$, but then $h$ is individually fair under $\dX$, so $\dX'$ cannot be minimal.
\end{proof}

\cref{thm:allmetrics} shows that the minimal metric $\dX$ in $\Mall$ is defined directly in terms of the model in question.
Ideally, we want the minimal metric to be simpler than the model so that it can help us interpret and reason about the fairness of the model.
Thus, in the rest of this paper we only consider weighted $\Lp$ metrics, which comprise a broad and interpretable family of metrics defined over $\R^d$.

With this set of metrics, we can no longer prove a theorem as general as \cref{thm:allmetrics}, so we now prove a result for linear regression models.
In this setting, we have $\Y = \R$, and the distance metric is simply the absolute value $\dY(y_1, y_2) = |y_1 - y_2|$.
\cref{thm:lplinear} identifies the weighted $\Lp$ metric that is uniquely minimal for a given linear regression model.

\begin{theorem} \label{thm:lplinear}
Let $h: \R^d \to \R$ be a linear regression model with coefficients $\beta_1, \ldots, \beta_d$, and let $\ML$ be the set of all weighted $\Lp$ metrics.
Then, the $L^1$ metric with weights $w_i = |\beta_i|$ is the unique minimal metric in $\ML$ with respect to $h$.
\end{theorem}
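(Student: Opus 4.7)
The plan is to verify first that $h$ is individually fair under the candidate metric $D(\x_1,\x_2) = \sum_i |\beta_i|\,|x_{1i} - x_{2i}|$, and then to establish the stronger statement that $D$ is a pointwise lower bound on every weighted $\Lp$ metric in $\ML$ under which $h$ is individually fair. Both minimality and uniqueness then follow directly from \cref{def:minimal}.

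Fairness of $h$ under $D$ is an immediate consequence of the triangle inequality: writing $\Delta_i = x_{1i} - x_{2i}$, one has $|h(\x_1) - h(\x_2)| = |\sum_i \beta_i \Delta_i| \le \sum_i |\beta_i|\,|\Delta_i| = D(\x_1,\x_2)$.

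The main step is the pointwise lower bound. Let $D' \in \ML$ be a weighted $\Lp$ metric (with some exponent $p \ge 1$ and weights $w'_i \ge 0$) under which $h$ is individually fair, and fix arbitrary $\x_1, \x_2$. I would introduce an auxiliary pair $\x_1', \x_2'$ defined by $\x_2' = \bm{0}$ and $x'_{1i} = \sgn(\beta_i)\,|\Delta_i|$, so that the coordinate differences are sign-aligned with $\beta$. Three observations then chain into the bound: (i) $D(\x_1',\x_2') = D(\x_1,\x_2)$, since coordinates with $\beta_i = 0$ are suppressed in $D$ and the rest have the same magnitude as before; (ii) $D'(\x_1',\x_2') \le D'(\x_1,\x_2)$ by monotonicity of $D'$ in each $|\Delta_i|$, because only the $\beta_i = 0$ coordinates could change (and only downward); and (iii) $|h(\x_1') - h(\x_2')| = \sum_i |\beta_i|\,|\Delta_i| = D(\x_1,\x_2)$, since the sign alignment eliminates all cancellation in the linear combination. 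Applying the individual fairness of $h$ under $D'$ to the pair $(\x_1',\x_2')$ then yields $D(\x_1,\x_2) \le D'(\x_1',\x_2') \le D'(\x_1,\x_2)$, as desired.

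Once the pointwise lower bound is in place, minimality of $D$ is immediate, and uniqueness follows by a short contradiction: any other minimal $D'' \in \ML$ would satisfy $D \le D''$ with $D \in \ML$ and $h$ fair under $D$, forcing $D'' = D$. The main technical insight I expect to carry the proof is the sign-alignment trick: once the invariance of both metrics under coordinate sign changes is exploited, the worst case of the fairness inequality is realized precisely at inputs that also realize $D$, closing any multiplicative gap between the $\Lp$ and $L^1$ norms. The one remaining technicality, handling coordinates with $\beta_i = 0$, is absorbed via the monotonicity step (ii).
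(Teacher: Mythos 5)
Your proof is correct and takes essentially the same approach as the paper: both use the sign-alignment construction (coordinate differences $\sgn(\beta_i)|\Delta_i|$) so that the linear combination suffers no cancellation, forcing any fair $D' \in \ML$ to dominate $D$ pointwise. Your step (ii), using monotonicity rather than equality for $D'$ on the aligned pair, is in fact a small improvement over the paper, which asserts an equality that fails when some $\beta_i = 0$ (though the paper's chain still goes through with the inequality).
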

\begin{proof}
$\dX$ is clearly in $\ML$ by definition.
To see that $h$ is individually fair under $\dX$, note that for all $\x_1, \x_2 \in \R^d$
\begin{equation} \label{eqn:lplinear}
\begin{multlined}
\dY(h(\x_1), h(\x_2))
= \textstyle |\sum_{i=1}^d \beta_i (x_{1i} - x_{2i})| \\
\le \textstyle \sum_{i=1}^d |\beta_i (x_{1i} - x_{2i})|
= \dX(\x_1, \x_2).
\end{multlined}
\end{equation}

The rest of the proof closely mirrors the argument given in the proof of \cref{thm:allmetrics}, so we only mention how the proofs differ.
As in the proof of \cref{thm:allmetrics}, we assume that there exists $\x_1, \x_2 \in \R^d$ such that $\dX(\x_1, \x_2) > \dX'(\x_1, \x_2)$.
Our goal is to show that $\dX'$ is not individually fair, and for this proof we have the additional condition that $\dX' \in \ML$.
However, it is not necessarily true that $\dX'(\x_1, \x_2) < \dY(h(\x_1), h(\x_2))$, so we instead construct $\x'_2$ such that $\dX'(\x_1, \x'_2) < \dY(h(\x_1), h(\x'_2))$.

Let $x'_{2i} = x_{1i} - \sgn(\beta_i) |x_{1i} - x_{2i}|$.
With \cref{eqn:lp}, we can verify that $\dX(\x_1, \x_2) = \dX(\x_1, \x'_2)$ for any $\dX \in \ML$.
Moreover, $\beta_i (x_{1i} - x'_{2i}) \ge 0$ for all $i$, so the equality in \cref{eqn:lplinear} holds if we replace $\x_2$ by $\x'_2$.
Combining all of these relations, we arrive at the desired result:
\begin{multline*}
\dX'(\x_1, \x'_2) = \dX'(\x_1, \x_2) < \dX(\x_1, \x_2) \\
= \dX(\x_1, \x'_2) = \dY(h(\x_1), h(\x'_2)). \qedhere
\end{multline*}
\end{proof}

\section{Randomized Smoothing} \label{sec:smoothing}
For settings without a simple linear relation between the inputs and the outputs, neural networks often replace linear models.
However, neural networks are often susceptible to adversarial examples~\cite{szegedy2014intriguing,goodfellow2015explaining}, 
which are inputs to the model that are created by applying a small perturbation to an original input with the goal of causing a very large change in the model's output.
The frequent success of these attacks show that a small change in $\X$ can cause a large change in $\Y$, which is contrary to individual fairness.

Previously, \citeyearpar{cohen2019certified} introduced randomized smoothing, a post-processing method that ensures that the post-processed model is robust against perturbations of size, measured with the standard $L^2$ norm, up to a threshold that depends on the input point.
In this section, for any given metric, we apply a modified version of randomized smoothing and prove that the resulting model is individually fair under that metric.
We note that this result does not immediately follow from prior results---individual fairness imposes the same constraint on every point in the input space, whereas the certified threshold of Cohen \textit{et al.}\ is a function of the input point.

In the rest of this paper, we assume that $\Y$ is categorical, following the setting of \citeyearpar{cohen2019certified}.
In this section, we present and prove two methods for deriving an individually fair model from an arbitrary function $f: \R^d \to \Y$.
Like the models considered by \citeyearpar{dwork2012fairness}, our fair model $\smodel$ maps $\R^d$ to $\Delta(\Y)$, which is the set of probability distributions over $\Y$.
It is important to note that $\smodel$ is deterministic and that we treat its output simply as an array of probabilities.
To avoid confusion with the randomness that we introduce in \cref{sec:applications}, we will write $\smodel(\x)[y]$ to denote the probability $\Pr[\smodel(\x) = y]$.
\begin{definition}[Randomized smoothing] \label{def:smoothing}
Let $f: \R^d \to \Y$ be an arbitrary model, and let $g: \R^d \to \R$ be a probability distribution\footnote{We abuse notation and use $g$ to denote both the distribution and its probability density function.}.
Then, the \emph{smoothed model} $\smodel: \R^d \to \Delta(\Y)$ is defined by
\begin{equation} \label{eqn:smoothing}
\smodel(\x)[y] = \int_{\R^d} \one[f(\x+\t) = y] \cdot g(\t) \, d\t
\end{equation}
for all $y \in \Y$, and $g$ is called the \emph{smoothing distribution}.
\end{definition}

Intuitively, $f$ is the original model, and the value of the smoothed model $\smodel$ at $\x$ is found by querying $f$ on points around $\x$.
We choose the points around $\x$ according to the distribution $g$, and the output $\smodel(\x)$ of the smoothed model is a probability distribution of the values of $f$ at the queried points.
To reason about the individual fairness of $\smodel$, we use the total variation distance (\cref{eqn:tv}) to define the distance $\ddY$ between probability distributions.
\begin{equation} \label{eqn:tv}
\textstyle \ddY(Y_1, Y_2) = \frac{1}{2} \sum_{y \in \Y} |Y_1[y] - Y_2[y]|.
\end{equation}

\subsection{Laplace Smoothing Distribution} \label{sec:laplace}
One main difference between this setting and that in \cref{sec:minimal} is that we have a choice of the smoothing distribution $g$.
Thus, instead of simply finding a metric under which the model is individually fair, we adapt the smoothing distribution to a given metric.
\cref{thm:laplacefair} shows that, for any weighted $\Lp$ metric $\dX$, there exists a smoothing distribution $g$ that guarantees that $\smodel$ is individually fair under $\dX$ for all $f$.

\begin{theorem}[Laplace smoothing] \label{thm:laplacefair}
Let $\ML$ be the set of all weighted $\Lp$ metrics.
For any $\dX \in \ML$, let $g(\t) = \exp(-2\dX(\bm{0},\t)) / Z$, where $Z$ is the normalization factor $\int_{\R^d} \exp(-2\dX(\bm{0},\t)) \, d\t$.
Then, $\smodel$ is individually fair under $\dX$ for all $f$.
\end{theorem}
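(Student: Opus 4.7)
The plan is to reduce the statement to a bound on the total variation distance between two shifted copies of the smoothing density $g$, and then to prove that bound by differentiating along the line segment between $\x_1$ and $\x_2$.

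First I would apply the change of variables $\t \mapsto \t - \x$ inside \cref{eqn:smoothing} to rewrite $\smodel(\x)[y] = \int \one[f(\t) = y]\, g(\t - \x)\, d\t$. Since the sets $\{\t : f(\t) = y\}$ partition $\R^d$ as $y$ ranges over $\Y$, substituting into \cref{eqn:tv} and pulling absolute values into each integral via the triangle inequality yields
\[
\ddY(\smodel(\x_1), \smodel(\x_2)) \le \tfrac{1}{2}\int |g(\t - \x_1) - g(\t - \x_2)|\, d\t,
\]
so $f$ no longer appears on the right-hand side. It therefore suffices to show this integral is at most $2\dX(\x_1, \x_2)$.

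Next, I parameterize the segment by $\x(s) = (1-s)\x_1 + s\x_2$ and apply the fundamental theorem of calculus in $s$, obtaining $|g(\t - \x_1) - g(\t - \x_2)| \le \int_0^1 \bigl|\tfrac{d}{ds} g(\t - \x(s))\bigr|\, ds$. Differentiating $g(\bm{u}) = \exp(-2\dX(\bm{0}, \bm{u}))/Z$ by the chain rule,
\[
\tfrac{d}{ds} g(\t - \x(s)) = 2\, g(\t - \x(s))\, \nabla \dX(\bm{0}, \t - \x(s)) \cdot (\x_2 - \x_1).
\]
The central pointwise estimate is the gradient inequality
\[
|\nabla \dX(\bm{0}, \bm{u}) \cdot (\x_2 - \x_1)| \le \dX(\bm{0}, \x_2 - \x_1) = \dX(\x_1, \x_2),
\]
which I would prove by writing $\partial_i \dX(\bm{0}, \bm{u}) = w_i |u_i|^{p-1}\sgn(u_i)/\dX(\bm{0}, \bm{u})^{p-1}$ and applying Hölder's inequality with conjugate exponent $q = p/(p-1)$; the identity $(p-1)q = p$ makes the $q$-norm of the gradient equal $1$. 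Combining these, $|\tfrac{d}{ds} g(\t - \x(s))| \le 2\dX(\x_1, \x_2)\, g(\t - \x(s))$, and then Fubini together with $\int g(\t - \x(s))\, d\t = 1$ for every $s$ gives $\int |g(\t - \x_1) - g(\t - \x_2)|\, d\t \le 2\dX(\x_1, \x_2)$, completing the argument.

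The main work is the gradient bound above; the rest is bookkeeping. One mild subtlety worth flagging is that $\bm{u} \mapsto \dX(\bm{0}, \bm{u})$ fails to be differentiable on a measure-zero set---at the origin when $p > 1$, and on coordinate hyperplanes when $p = 1$---but $s \mapsto g(\t - \x(s))$ remains absolutely continuous because $\dX(\bm{0}, \cdot)$ is globally Lipschitz (being a norm), so the fundamental theorem of calculus goes through and the null set of nondifferentiable points contributes nothing to any integral.
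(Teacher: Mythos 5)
Your proof is correct, and it takes a genuinely different route from the paper's. The paper's argument derives pointwise density-ratio bounds $g(\t-\beps)/g(\t) \le e^{2\dX(\bm{0},\beps)}$ directly from the triangle inequality, which puts $\smodel$ in a $2\dX(\bm{0},\beps)$-local-differential-privacy-style relationship; it then invokes a result of Kairouz et al.\ to conclude $\ddY(\smodel(\x),\smodel(\x+\beps)) \le (c-1)/(c+1)$ with $c=e^{2\dX(\bm{0},\beps)}$, and finishes with the calculus fact $(c-1)/(c+1) \le (\ln c)/2$. Your argument instead bounds $\ddY$ by the total variation distance $\tfrac12\int|g(\t-\x_1)-g(\t-\x_2)|\,d\t$ between the two shifted densities (eliminating $f$ immediately), then controls this by integrating $\bigl|\tfrac{d}{ds}g(\t-\x(s))\bigr|$ along the segment and using the dual-norm bound $|\nabla\dX(\bm{0},\bm{u})\cdot(\x_2-\x_1)| \le \dX(\x_1,\x_2)$, which your H\"older computation correctly establishes and which is really just the differentiated form of the 1-Lipschitz property of a norm with respect to itself---you could equally obtain it from the triangle inequality $|\dX(\bm{0},\bm{u}+s\bm{v})-\dX(\bm{0},\bm{u})|\le s\,\dX(\bm{0},\bm{v})$ as $s\to 0$, which would sidestep the explicit gradient formula and its measure-zero caveats. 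Both proofs are sound; the paper's is shorter by delegating to Kairouz et al.\ and in fact yields the strictly sharper bound $\ddY \le \tanh(\dX(\x_1,\x_2))$, while yours is self-contained, uses only elementary calculus (FTC, Tonelli, H\"older), and makes transparent the mechanism: Laplace densities shifted by $\beps$ have total variation distance at most $2\dX(\bm{0},\beps)$. Your handling of the nondifferentiability of $\dX(\bm{0},\cdot)$ on a null set is appropriately flagged and does not cause a problem because the map $s\mapsto g(\t-\x(s))$ is Lipschitz, so FTC holds almost everywhere and the integrals are unaffected.
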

\begin{proof}
We will show that $\ddY(\smodel(\x), \smodel(\x+\beps)) \le \dX(\x, \x+\beps)$ for all $\x, \beps \in \R^d$.

First, since $\dX(\t, \t-\beps) = \dX(\bm{0}, \beps)$ for all weighted $\Lp$ metric $\dX$, we have
\begin{equation} \label{eqn:triangle}
\dX(\bm{0}, \t) - \dX(\bm{0}, \beps) \le \dX(\bm{0}, \t-\beps) \le \dX(\bm{0}, \t) + \dX(\bm{0}, \beps)
\end{equation}
by the triangle inequality.
We can apply the first inequality in \cref{eqn:triangle} to bound the probability $g(\t-\beps)$ in terms of $g(\t)$.
\begin{align*}
g(\t-\beps) &= \exp(-2\dX(\bm{0},\t-\beps)) / Z \\
&\le \exp(-2[\dX(\bm{0},\t) - \dX(\bm{0}, \beps)]) / Z \\
&= \exp(-2\dX(\bm{0},\t)) / Z \cdot \exp(2\dX(\bm{0}, \beps)) \\
&= g(\t) \cdot \exp(2\dX(\bm{0}, \beps))
\end{align*}
Then, for all $y \in \Y$ we have
\begin{equation} \label{eqn:problower}
\begin{aligned}
&\smodel(\x+\beps)[y] \\
&\textstyle = \int_{\R^d} \one[f(\x+\beps+\t) = y] \cdot g(\t) \, d\t \\
&\textstyle = \int_{\R^d} \one[f(\x+\t) = y] \cdot g(\t-\beps) \, d\t \\
&\textstyle \le \int_{\R^d} \one[f(\x+\t) = y] \cdot g(\t) \cdot \exp(2\dX(\bm{0}, \beps)) \, d\t \\
&= \smodel(\x)[y] \cdot \exp(2\dX(\bm{0}, \beps)).
\end{aligned}
\end{equation}
Similarly, we can apply the second inequality in \cref{eqn:triangle} to derive the upper bound
\begin{equation} \label{eqn:probupper}
\smodel(\x+\beps)[y] \ge \smodel(\x)[y] / \exp(2\dX(\bm{0}, \beps)).
\end{equation}

We can now apply a previous result by \citeyearpar[Theorem 6]{kairouz2016extremal} to determine the maximum distance between $\smodel(\x)$ and $\smodel(\x+\beps)$ that is attainable with the above constraints.
For brevity, let $c$ denote $\exp(2\dX(\bm{0}, \beps))$.
In the context of $\varepsilon$-local differential privacy, Kairouz \textit{et al.}\ showed that the maximum possible total variation distance is $(e^\varepsilon-1) / (e^\varepsilon+1)$.
Replacing $e^\varepsilon$ with $c$, we see that the distance between $\smodel(\x)$ and $\smodel(\x+\beps)$ is at most $(c-1) / (c+1)$.

Finally, it remains to be proven that this quantity is not more than $\dX(\x, \x+\beps)$, which is equivalent to $\dX(\bm{0}, \beps)$ for weighted $\Lp$ metrics.
Since this distance can be written as $(\ln c) / 2$, it suffices to show that $(c-1) / (c+1) \le (\ln c) / 2$ for all $c \ge 1$.
This inequality follows from the fact that equality holds at $c=1$ and that the derivative of the right-hand side is never less than that of the left-hand side for $c \ge 1$.
\end{proof}

Although \cref{thm:laplacefair} identifies a smoothing distribution that ensures the individual \emph{fairness} of the resulting model under $\dX$, we also want the smoothed model to retain the \emph{utility} of the original model $f$.
In the extreme case where $g$ is the uniform distribution over $\R^d$, the resulting model will be a constant function and therefore satisfy individual fairness under any metric, but it will not be very useful for classification tasks.
More generally, smoothed models that are individually fair under smaller distance metrics tend to not preserve as much locally relevant information about $f$.
Thus, we argue that a smoothing distribution does not unnecessarily lower the model's utility by showing that $\dX$ is \emph{minimal}.
The definition of minimality that we use here differs from \cref{def:minimal} in that the smoothed model must be individually fair for all $f$.

\begin{definition}[Minimal distance metric, smoothing] \label{def:minimalsmoothing}
Let $\M \subseteq \ML$ be a set of distance metrics in $\R^d$.
A metric $\dX \in \M$ is \emph{minimal} in $\M$ with respect to a smoothing distribution $g$ if \emph{(1)} $\smodel$ is individually fair under $\dX$ for all $f$, and \emph{(2)} there does not exist a different $\dX' \in \M$ such that $\smodel$ is individually fair under $\dX'$ for all $f$ and $\dX(\x_1, \x_2) \ge \dX'(\x_1, \x_2)$ for all $\x_1, \x_2 \in \R^d$.
\end{definition}

For general weighted $\Lp$ metrics, the inequalities in \cref{eqn:triangle} are strict for most $\t$, so the bounds in \cref{eqn:problower,eqn:probupper} are not tight, and $\dX$ is not guaranteed to be minimal.
On the other hand, if $\dX$ is a weighted $L^1$ metric, \cref{thm:laplaceminimal} shows that it is minimal with respect to its Laplace smoothing distribution.

\begin{theorem} \label{thm:laplaceminimal}
Let $\M_1$ be the set of all weighted $L^1$ metrics.
For any $\dX \in \M_1$, let $g(\t) = \exp(-2\dX(\bm{0},\t)) / Z$, where $Z$ is the normalization factor $\int_{\R^d} \exp(-2\dX(\bm{0},\t)) \, d\t$.
Then, $\dX$ is uniquely minimal in $\M_1$ with respect to $g$.
\end{theorem}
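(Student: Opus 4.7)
The plan is to pivot everything off a single violation lemma: if $\dX' \in \M_1$ has a weight $w_k'$ strictly smaller than the corresponding weight $w_k$ of $\dX$, then some model $f$ makes $\smodel$ unfair under $\dX'$. Given this lemma, both halves of minimality and uniqueness follow quickly, while condition~(1) of \cref{def:minimalsmoothing} is supplied directly by \cref{thm:laplacefair}. I implicitly assume $w_i > 0$ for every $i$ so that $g$ is a well-defined density.

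To prove the lemma I would fix a small $\lambda > 0$ and take $\x_1 = \bm{0}$, $\x_2 = \lambda\bm{e}_k$ (with $\bm{e}_k$ the $k$-th standard basis vector), paired with the two-class threshold $f(\x) = y_1$ if $x_k \ge \lambda/2$ and $y_0$ otherwise. The key observation is that for a weighted $L^1$ metric the density $g(\t) \propto \exp(-2\sum_i w_i|t_i|)$ factorises into independent one-dimensional Laplace marginals, so the TV we need collapses to the TV between two $1$-D Laplace distributions of rate $2w_k$ shifted by $\lambda$; that equals $1 - \exp(-w_k\lambda)$, and the threshold $f$ realises this bound exactly. Since $\dX'(\x_1,\x_2) = w_k'\lambda$, dividing by $\lambda$ and letting $\lambda \to 0^+$ would force $w_k \le w_k'$, contradicting the hypothesis; hence for all small enough $\lambda$ the fairness inequality is violated.

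Given the lemma, minimality is a contrapositive: any $\dX' \in \M_1$ with $\dX' \le \dX$ pointwise has $w_i' \le w_i$ for every $i$ (seen by plugging in $\x_1 = \bm{0}$, $\x_2 = \bm{e}_i$), and $\dX' \ne \dX$ then forces some $w_k' < w_k$, which the lemma rules out. Uniqueness is the same argument run in the other direction: if $\dX''$ were another minimal metric in $\M_1$, then fairness of $\smodel$ under $\dX''$ combined with the lemma's contrapositive forces $w_i'' \ge w_i$ for all $i$, so $\dX \le \dX''$ pointwise. Since $\dX \in \M_1$ is itself fair by \cref{thm:laplacefair}, any $\dX'' \ne \dX$ then contradicts condition~(2) in the minimality of $\dX''$.

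The main obstacle is supplying a sufficiently \emph{tight} lower bound on the attainable TV. The upper bound used inside \cref{thm:laplacefair} goes through the Kairouz $\varepsilon$-LDP estimate and is not tight for the Laplace mechanism, so merely inverting it does not yield minimality. What makes the weighted $L^1$ case special is precisely that the Laplace density factorises across coordinates; this lets the adversary push the entire perturbation into a single coordinate and saturate the true $1$-D Laplace TV with a plain threshold $f$, which is why \cref{thm:laplaceminimal} singles out $\M_1$ rather than the broader $\ML$.
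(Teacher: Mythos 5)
Your proposal is correct and takes essentially the same approach as the paper: both construct a one-coordinate threshold model $f$ and show that the total-variation distance between $\smodel$ at two points separated along coordinate $k$ has slope $w_k$ at the origin, forcing $w'_k \ge w_k$ in the limit and contradicting $w'_k < w_k$. The only cosmetic difference is that you place the threshold at $\lambda/2$ (the cut that exactly attains the $1$-D Laplace TV, giving $\ddY = 1 - e^{-w_k\lambda}$), whereas the paper fixes the threshold at $0$ (giving $\ddY = (1-e^{-2w_k\epsilon})/2$); both yield $\ddY/\dX' \to w_k/w'_k > 1$ as the perturbation shrinks, so the argument goes through identically.
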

\begin{proof}
We have already proven in \cref{thm:laplacefair} that $\smodel$ is individually fair under $\dX$ for all $f$.
It remains to show that there does not exist a different $\dX' \in \M_1$ such that $\smodel$ is individually fair under $\dX'$ for all $f$ and $\dX(\x_1, \x_2) \ge \dX'(\x_1, \x_2)$ for all $\x_1, \x_2 \in \R^d$.

Let $w_1, \ldots, w_d$ and $w'_1, \ldots, w'_d$ be the weights of $\dX$ and $\dX'$, respectively.
If $\dX(\x_1, \x_2) \ge \dX'(\x_1, \x_2)$ for all $\x_1, \x_2 \in \R^d$, we must have $w_i \ge w'_i$ for all $i$.
Moreover, since $\dX \neq \dX'$, there exists $i$ such that $w_i > w'_i$.
We now construct $f$ such that $\smodel$ is not individually fair under $\dX'$.

Let $f: \R^d \to \binset$ be a function such that $f(\x) = \one[x_i \ge 0]$.
We will show that there exists $\epsilon > 0$ such that $\ddY(\smodel(\bm{0}), \smodel(\epsilon \e_i)) > \dX'(\bm{0}, \epsilon \e_i)$, where $\e_i$ is the basis vector that is one in the $i$-th coordinate and zero in all others.
Applying \cref{eqn:smoothing} and simplifying, we get $\smodel(\bm{0})[0] = 1/2$ and $\smodel(\epsilon \e_i)[0] = \exp(-2w_i\epsilon)/2$.
Therefore, the distance $\ddY$ is $(1 - \exp(-2w_i\epsilon))/2$.
Moreover, we have $\dX' = w'_i\epsilon$.
The ratio $\ddY/\dX'$ approaches $w_i/w'_i > 1$ as $\epsilon \to 0$, so when $\epsilon$ is sufficiently small, we have $\ddY > \dX'$.

Uniqueness follows from the argument given in the last paragraph of the proof of \cref{thm:allmetrics}.
\end{proof}

\subsection{Gaussian Smoothing Distribution} \label{sec:gaussian}
As we show in \cref{sec:experiments}, in practice Laplace smoothing distributions do not preserve well the utility of $f$ due to their relatively high densities at the tails.
Thus, we present Gaussian smoothing as an alternative, which \cref{thm:gaussianfair} shows is individually fair under any weighted $L^2$ metric.
Since $D_2(\x_1, \x_2) \le d^{\max(0, 1/2 - 1/p)} D_p(\x_1, \x_2)$ for any weighted $L^2$ and $\Lp$ metrics $D_2$ and $D_p$ with the same weights, we can then scale the weights accordingly to make $\smodel$ fair under any given $\Lp$ metric.
For simplicity, we only consider the setting of binary classification, i.e., $\Y = \binset$.

\begin{theorem}[Gaussian smoothing] \label{thm:gaussianfair}
Let $\dX$ be a weighted $L^2$ metric with weights $w_1, \ldots, w_d$, and let $\bsigma$ be a diagonal matrix with $\Sigma_{ii} = (2\pi w_i)^{-1}$.
If $g$ is Gaussian with mean $\bm{0}$ and variance $\bsigma$, $\smodel$ is individually fair under $\dX$ for all $f: \R^d \to \binset$.
\end{theorem}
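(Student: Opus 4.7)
The plan is to upper-bound $\ddY(\smodel(\x), \smodel(\x+\beps))$ by the total variation distance between the two Gaussian smoothing measures $N(\x, \bsigma)$ and $N(\x+\beps, \bsigma)$, compute that TV distance explicitly, and verify it is at most $\dX(\x, \x+\beps)$.

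First I would observe that $\smodel(\x)[y] = \Pr_{X \sim N(\x, \bsigma)}[f(X) = y]$, so each entry of $\smodel$ is simply the Gaussian probability of an event. Because $\Y = \binset$, \cref{eqn:tv} collapses to $\ddY(\smodel(\x), \smodel(\x+\beps)) = |\smodel(\x)[1] - \smodel(\x+\beps)[1]|$, which is the gap between the two Gaussians on the fixed event $\{x : f(x) = 1\}$. Hence it is bounded by $TV(N(\x,\bsigma), N(\x+\beps,\bsigma))$, and by translation invariance this equals $TV(N(\bm{0},\bsigma), N(\beps,\bsigma))$---a quantity that no longer depends on $f$ or $\x$.

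Next I would invoke the standard closed form for the TV distance between two Gaussians with a shared covariance, namely $TV(N(\bm{0},\bsigma), N(\beps,\bsigma)) = 2\Phi(\delta/2) - 1$ with $\delta = \sqrt{\beps^\top \bsigma^{-1} \beps}$, where $\Phi$ is the standard normal CDF. The short derivation rotates so that $\beps$ lies along a coordinate axis and then rescales to make the covariance identity, reducing to the one-dimensional case $TV(N(0,1), N(\delta,1)) = \Phi(\delta/2) - \Phi(-\delta/2)$. Plugging in $\Sigma_{ii}^{-1} = 2\pi w_i$ gives $\delta^2 = 2\pi \sum_i w_i \epsilon_i^2$, so $\delta = \sqrt{2\pi}\, \dX(\bm{0},\beps)$.

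Finally I would close the gap via the elementary inequality $2\Phi(\delta/2) - 1 \le \delta/\sqrt{2\pi}$, which is immediate from $\Phi(\delta/2) - 1/2 = \int_0^{\delta/2}\phi(s)\,ds \le (\delta/2)\cdot \phi(0) = \delta/(2\sqrt{2\pi})$, using that the standard normal density peaks at $\phi(0) = 1/\sqrt{2\pi}$. Chaining the three steps yields $\ddY(\smodel(\x), \smodel(\x+\beps)) \le \dX(\bm{0},\beps) = \dX(\x, \x+\beps)$. The main obstacle is the closed-form Gaussian TV in step two; the isotropization argument works, but should one prefer to bypass it, a Neyman--Pearson reduction (as in Cohen \textit{et al.}) identifies a half-space orthogonal to $\bsigma^{-1}\beps$ as the worst-case $f$, which delivers the same one-dimensional Gaussian integral and hence the same bound.
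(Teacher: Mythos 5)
Your proof is correct, and it reaches the same worst case as the paper does but packages the argument more cleanly. The paper's proof proceeds by explicitly invoking the Neyman--Pearson lemma (Lemma 3 of Cohen \textit{et al.}) to produce a specific half-space classifier $\fstar$ that dominates the given $f$, then computes the resulting one-dimensional Gaussian integral and bounds it by its peak density times the interval length. You instead observe that each coordinate of $\smodel(\x)$ is a Gaussian probability of a fixed event, so $\ddY(\smodel(\x),\smodel(\x+\beps)) = |\smodel(\x)[1]-\smodel(\x+\beps)[1]| \le TV(N(\bm{0},\bsigma), N(\beps,\bsigma))$, and then you invoke the standard closed form $TV = 2\Phi(\delta/2)-1$ with $\delta = \sqrt{\beps^\top\bsigma^{-1}\beps}$. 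The two routes share the same kernel---the extremal discriminating set is a half-space, which is exactly what Neyman--Pearson certifies and what the Gaussian-TV formula already bakes in---but your version is more modular: it separates the ``bound by TV'' step (trivial, $f$-independent) from the ``compute TV of two Gaussians'' step (a citable textbook fact), and your closing inequality $2\Phi(\delta/2)-1 \le \delta/\sqrt{2\pi}$ is the same peak-density bound as the paper's, just stated in one dimension from the outset. The computation $\delta = \sqrt{2\pi}\,\dX(\bm{0},\beps)$ is correct given $\Sigma_{ii}^{-1} = 2\pi w_i$, so the chain closes. The only thing you gain by citing the Gaussian-TV closed form rather than deriving it is brevity, at the cost of opacity for a reader who does not already know that formula; the paper's explicit Neyman--Pearson construction makes the argument self-contained. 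Both are valid.
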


To prove this theorem, we will apply the Neyman--Pearson lemma~\cite{neyman1933problem}, as formulated by \citeyearpar[Lemma 3]{cohen2019certified}.

\begin{lemma}[Neyman--Pearson] \label{lem:neyman}
Let $X_1$ and $X_2$ be random variables in $\R^d$ with densities $\mu_{X_1}$ and $\mu_{X_2}$, and let $f, \fstar: \R^d \to \binset$ such that $\fstar(\t) = 1$ if and only if $\mu_{X_2}(\t) / \mu_{X_1}(\t) \ge k$ for some threshold $k > 0$.
Then,
\[
\begin{multlined}
\Pr[f(X_1) = 1] = \Pr[\fstar(X_1) = 1] \\
\text{implies } \Pr[f(X_2) = 1] \le \Pr[\fstar(X_2) = 1].
\end{multlined}
\]
\end{lemma}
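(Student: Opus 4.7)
The plan is to prove the Neyman--Pearson lemma by a standard likelihood-ratio comparison, rewriting the two probabilities as integrals of $\mu_{X_2}$ over acceptance sets and bounding their difference below by a multiple of the analogous difference for $\mu_{X_1}$, which vanishes by hypothesis.

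Concretely, I would begin by defining $A = \{\t \in \R^d : \fstar(\t) = 1\}$ and $B = \{\t \in \R^d : f(\t) = 1\}$, so that for $i \in \{1, 2\}$ we have $\Pr[\fstar(X_i) = 1] = \int_A \mu_{X_i}(\t) \, d\t$ and $\Pr[f(X_i) = 1] = \int_B \mu_{X_i}(\t) \, d\t$. Cancelling the common region $A \cap B$, the quantity of interest becomes
\begin{equation*}
\Pr[\fstar(X_2) = 1] - \Pr[f(X_2) = 1] = \int_{A \setminus B} \mu_{X_2}(\t) \, d\t - \int_{B \setminus A} \mu_{X_2}(\t) \, d\t,
\end{equation*}
and similarly for $X_1$ with $\mu_{X_1}$ in place of $\mu_{X_2}$.

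Next I would use the likelihood-ratio characterization of $\fstar$. On $A \setminus B \subseteq A$ we have $\mu_{X_2}(\t) \ge k \, \mu_{X_1}(\t)$, while on $B \setminus A \subseteq A^c$ we have $\mu_{X_2}(\t) < k \, \mu_{X_1}(\t)$. Applying these pointwise inequalities inside the two integrals gives
\begin{equation*}
\Pr[\fstar(X_2) = 1] - \Pr[f(X_2) = 1] \ge k \left( \int_{A \setminus B} \mu_{X_1}(\t) \, d\t - \int_{B \setminus A} \mu_{X_1}(\t) \, d\t \right) = k \bigl( \Pr[\fstar(X_1) = 1] - \Pr[f(X_1) = 1] \bigr).
\end{equation*}
By the hypothesis, the right-hand side is $k \cdot 0 = 0$, and since $k > 0$ the conclusion $\Pr[f(X_2) = 1] \le \Pr[\fstar(X_2) = 1]$ follows.

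This proof is essentially bookkeeping once the symmetric-difference decomposition is in place, so I do not anticipate a serious obstacle; the only subtlety is keeping the direction of the inequality consistent on $A^c$ versus $A$, which is handled by noting that $\fstar = 0$ on $A^c$ forces the ratio to lie strictly (or non-strictly, depending on the tie-breaking convention of the $\ge k$ threshold) below $k$ there, and the argument is unaffected by the measure-zero boundary set $\{\mu_{X_2} = k\mu_{X_1}\}$.
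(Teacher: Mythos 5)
Your proof is correct: the symmetric-difference decomposition over $A \setminus B$ and $B \setminus A$, combined with the pointwise likelihood-ratio bounds $\mu_{X_2} \ge k\mu_{X_1}$ on $A$ and $\mu_{X_2} < k\mu_{X_1}$ off $A$, is the standard Neyman--Pearson argument. The paper itself does not prove this lemma---it imports it verbatim from Cohen et al.\ [2019, Lemma 3]---and your argument matches the proof given in that reference, so there is nothing to reconcile; your closing remark about the set where $\mu_{X_1}$ vanishes (so the ratio is conventionally $\infty$ and $\fstar = 1$) is the only point worth stating explicitly, and it causes no difficulty since $B \setminus A$ then lies where $\mu_{X_1} > 0$.
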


\begin{proof}[Proof of \cref{thm:gaussianfair}]
We proceed by showing that $\ddY(\smodel(\x), \smodel(\x+\beps)) \le \dX(\x, \x+\beps)$ for all $\x, \beps \in \R^d$ and $f: \R^d \to \binset$.
For any given $f$, we will first find $\fstar$ such that
\begin{equation} \label{eqn:fstar}
\begin{multlined}
\ddY(\smodel(\x), \smodel(\x+\beps)) \\
\le \ddY(\smodelstar(\x), \smodelstar(\x+\beps)).
\end{multlined}
\end{equation}
We will then show that $\smodelstar$ is individually fair under $\dX$, which together with \cref{eqn:fstar} implies that $\smodel$ is also individually fair under $\dX$.

Fix $\x, \beps \in \R^d$, and assume without loss of generality that $\smodel(\x)[1] \le \smodel(\x+\beps)[1]$.
Then, we have
\begin{equation} \label{eqn:binarydist}
\ddY(\smodel(\x), \smodel(\x+\beps)) = \smodel(\x+\beps)[1] - \smodel(\x)[1].
\end{equation}
We apply \cref{lem:neyman} by choosing $X_1$ and $X_2$ such that $g(\t) = \mu_{X_1}(\x+\t) = \mu_{X_2}(\x+\beps+\t)$.
By \cref{eqn:smoothing}, we have $\Pr[f(X_1) = 1] = \smodel(\x)[1]$ and $\Pr[f(X_2) = 1] = \smodel(\x+\beps)[1]$, and similar relations hold between $\fstar$ and $\smodelstar$.
Therefore, if there exists $\fstar$ that satisfies the condition in \cref{lem:neyman} such that $\smodel(\x)[1] = \smodelstar(\x)[1]$, then $\smodel(\x+\beps)[1] \le \smodelstar(\x+\beps)[1]$.
Combining these two (in)equalities, we get
\[ \smodel(\x+\beps)[1] - \smodel(\x)[1] \le \smodelstar(\x+\beps)[1] - \smodelstar(\x)[1], \] and \cref{eqn:fstar} follows from \cref{eqn:binarydist} and its $\smodelstar$ counterpart.

We now show that it is possible to find $\fstar$ such that $\smodelstar(\x)[1] = \smodel(\x)[1]$.
By construction, we have that $\fstar(\t) = 1$ if and only if
\[ \frac{\mu_{X_2}(\t)}{\mu_{X_1}(\t)} = \frac{g(\t-\x-\beps)}{g(\t-\x)} \ge k \]
for some $k > 0$.
Substituting in the Gaussian density function and solving for $\t$, we see that this inequality holds whenever $\beps^T \bsigma^{-1} \t \ge \kappa$, where $\kappa$ is a constant with respect to $\t$.
When evaluating $\smodelstar(\x)$ as per \cref{eqn:smoothing}, $\t$ is distributed normally, and therefore $\beps^T \bsigma^{-1} \t$ is also (univariate) Gaussian.
Thus, with the appropriate value of $\kappa$ we can obtain the desired $\fstar$.

Finally, it remains to show that $\smodelstar$ is individually fair under $\dX$.
Let $\tau = \beps^T \bsigma^{-1} \t$, and let $\gamma$ be the density function of $\tau$.
With some computation, we see that $\fstar(\t)$ and $\fstar(\t+\beps)$ differ if and only if $\kappa \le \tau < \kappa + \beps^T \bsigma^{-1} \beps$.
Moreover, since $\t$ has variance $\bsigma$, the variance of $\tau = \beps^T \bsigma^{-1} \t$ is $\beps^T \bsigma^{-1} \Var(\t) (\beps^T \bsigma^{-1})^T = \beps^T \bsigma^{-1} \beps$, and thus the maximum value of $\gamma$ is $(2\pi \beps^T \bsigma^{-1} \beps)^{-1/2}$.
We apply these two facts to arrive at the desired result:
\begin{align*}
&\ddY(\smodelstar(\x), \smodelstar(\x+\beps)) \\
&= \smodelstar(\x+\beps)[1] - \smodelstar(\x)[1] \\
&= \textstyle \int_{\R^d} (\fstar(\x+\beps+\t) - \fstar(\x+\t)) \cdot g(\t) \, d\t \\
&= \textstyle \int_{\R^d} (\fstar(\t+\beps) - \fstar(\t)) \cdot g(\t-\x) \, d\t \\
&= \textstyle \int_\kappa^{\kappa + \beps^T \bsigma^{-1} \beps} \gamma(\tau - \beps^T \bsigma^{-1} \x) \, d\tau \\
&\le \beps^T \bsigma^{-1} \beps \cdot (2\pi \beps^T \bsigma^{-1} \beps)^{-1/2} \\
&= \textstyle \sqrt{\sum_{i=1}^d \epsilon_i^2 / (2\pi \Sigma_{ii})} \\
&= \textstyle \sqrt{\sum_{i=1}^d \epsilon_i^2 \cdot w_i} = \dX(\x, \x+\beps). \qedhere
\end{align*}
\end{proof}

We end this section with \cref{thm:gaussianminimal}, which states that an $L^2$ metric $\dX$ is minimal with respect to its Gaussian smoothing distribution.
We omit the proof since it is very similar to that of \cref{thm:laplaceminimal}.

\begin{theorem} \label{thm:gaussianminimal}
Let $\M_2$ be the set of all weighted $L^2$ metrics.
For any $\dX \in \M_2$, let $w_1, \ldots, w_d$ be the weights, and let $\bsigma$ be a diagonal matrix with $\sigma_{ii} = (2\pi w_i)^{-1}$.
If $g$ is a Gaussian with mean $\bm{0}$ and variance $\bsigma$, $\dX$ is uniquely minimal in $\M_2$ with respect to $g$.
\end{theorem}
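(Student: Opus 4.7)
The plan is to follow the template of the proof of \cref{thm:laplaceminimal} almost verbatim, exploiting the fact that both halves of \cref{def:minimalsmoothing} have direct analogues. The fairness half is already discharged by \cref{thm:gaussianfair}, so only the ``no strictly smaller metric works'' half remains. I would suppose, for contradiction, that there is a different $\dX' \in \M_2$ with weights $w'_1, \ldots, w'_d$ satisfying $\dX(\x_1,\x_2) \ge \dX'(\x_1,\x_2)$ for all $\x_1, \x_2$. As in the Laplace case, comparing the metrics along the basis directions $\e_i$ forces $w_i \ge w'_i$ for every $i$, and $\dX \ne \dX'$ then yields some coordinate $i$ with strict inequality $w_i > w'_i$.

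Next I would exhibit a bad $f$ in the $i$-th coordinate. The natural choice, mirroring \cref{thm:laplaceminimal}, is the half-space indicator $f(\x) = \one[x_i \ge 0]$, mapping $\R^d \to \binset$. The key computation is then $\ddY(\smodel(\bm 0), \smodel(\epsilon \e_i))$ for small $\epsilon > 0$. Because $g$ is a product Gaussian with $\Sigma_{ii} = (2\pi w_i)^{-1}$, the integrals in \cref{eqn:smoothing} reduce to one-dimensional Gaussian tail probabilities: $\smodel(\bm 0)[1] = 1/2$ and $\smodel(\epsilon \e_i)[1] = \Phi(\epsilon/\sqrt{\Sigma_{ii}}) = \Phi(\epsilon \sqrt{2\pi w_i})$, where $\Phi$ is the standard normal CDF. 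Thus $\ddY(\smodel(\bm 0), \smodel(\epsilon \e_i)) = \Phi(\epsilon \sqrt{2\pi w_i}) - 1/2$, whereas $\dX'(\bm 0, \epsilon \e_i) = \epsilon \sqrt{w'_i}$.

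The payoff comes from a first-order expansion of $\Phi$ around $0$: using $\Phi(x) = 1/2 + x/\sqrt{2\pi} + O(x^3)$, the ratio
\[
\frac{\ddY(\smodel(\bm 0), \smodel(\epsilon \e_i))}{\dX'(\bm 0, \epsilon \e_i)} \; \longrightarrow \; \sqrt{w_i / w'_i} \; > \; 1 \quad \text{as } \epsilon \to 0^+.
\]
Hence for all sufficiently small $\epsilon$ the $\ddY$ distance exceeds the $\dX'$ distance, violating the individual fairness of $\smodel$ under $\dX'$ and contradicting the assumption. Uniqueness then follows from exactly the same argument used in the last paragraph of the proof of \cref{thm:allmetrics}.

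I do not expect any serious obstacle: the structural skeleton is already in place from \cref{thm:laplaceminimal}, and the only wrinkle is that for an $L^2$ metric the per-coordinate distance scales like $\sqrt{w_i}\,\epsilon$ rather than $w_i\,\epsilon$, and the smoothed probability uses the Gaussian CDF rather than the Laplace CDF. Both effects cancel in the right way because $\Sigma_{ii} = (2\pi w_i)^{-1}$ was chosen precisely so that the linearization of $\Phi$ at the origin produces the $\sqrt{w_i}$ factor. The most delicate step is therefore simply verifying that this choice of variance yields the correct leading coefficient in the Taylor expansion, which is a short calculation rather than a conceptual hurdle.
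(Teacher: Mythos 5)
Your proposal is correct and follows exactly the route the paper has in mind: the paper omits the proof of \cref{thm:gaussianminimal} with the remark that it is ``very similar to that of \cref{thm:laplaceminimal},'' and your argument is precisely that adaptation, using the same half-space witness $f(\x) = \one[x_i \ge 0]$ and the first-order expansion of $\Phi$ to produce the ratio $\sqrt{w_i/w'_i} > 1$ in the limit $\epsilon \to 0^+$. The one computation worth spelling out—that the choice $\Sigma_{ii} = (2\pi w_i)^{-1}$ makes the linearization of $\Phi(\epsilon\sqrt{2\pi w_i}) - 1/2$ come out to exactly $\epsilon\sqrt{w_i}$, matching the per-coordinate $L^2$ distance—is done correctly.
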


\section{Practical Implementation} \label{sec:applications}
In practice, it is infeasible to compute $\smodel(\x)$ because of the integral in \cref{eqn:smoothing}.
Therefore, to apply randomized smoothing in practice, we approximate the integral
\ifarxiv
with \cref{alg:smoothing}, i.e.,
\fi
by sampling $n$ points independently from the smoothing distribution $g$, evaluating the model with this noise added to $\x$, and returning the observed probability of predicting each class on the sampled points.
However, the resulting model may not be individually fair due to the finite sample size.
Thus, we define and prove $(\epsilon, \delta)$-individual fairness, which requires that the model be close to individually fair with high probability.

\ifarxiv
\begin{algorithm}[t] 
\begin{algorithmic}
\REQUIRE Model $f: \R^d \to \Y$, point $\x \in \R^d$, parameters $p \in \{1, 2, \infty\}$, $\bm{w} \in \R_+^d$ for the weighted $\Lp$ metric $\dX$
\ENSURE Distribution in $\Delta(\Y)$ that approximates $\smodel(\x)$

\vspace{1ex}

\STATE //Initialize the output probability distribution.
\FOR{$y \in \Y$}
	\STATE $\mathrm{prob}[y] \gets 0$
\ENDFOR

\STATE //Evaluate $f$ at $n$ randomly sampled points around $\x$.
\FOR{$j = 1, \ldots, n$}
	\STATE $\t_j \gets \mathrm{sampleNoise}(d, p, \bm{w})$
	\STATE $y_j \gets f(\x + \t_j)$
	\STATE $\mathrm{prob}[y_j] \gets \mathrm{prob}[y_j] + 1/n$
\ENDFOR

\RETURN $\mathrm{prob}$
\end{algorithmic}
\caption{Randomized smoothing by sampling} \label{alg:smoothing}
\end{algorithm}
\fi

\begin{definition}[$(\epsilon, \delta)$-individual fairness] \label{def:indivfairapprox}
A randomized model $h: \X \to \Delta(\Y)$ is \emph{$(\epsilon, \delta)$-individually fair} under metric $\dX: \X \times \X \to \R$ if, for all $x_1, x_2 \in \X$,
\begin{equation} \label{eqn:indivfairapprox}
\ddY(h(x_1), h(x_2)) \le \dX(x_1, x_2) + \epsilon
\end{equation}
with probability at least $1-\delta$.
The probability is taken over the randomness of $h$.
\end{definition}

\begin{theorem} \label{thm:smoothingapprox}
Let $\sampmodel$ be a model that approximates $\smodel$ with $n$ samples.
If $|\Y| = m$ and $\smodel$ is fair under $\dX$, then $\sampmodel$ is $(\epsilon, 2me^{-4n\epsilon^2/m^2})$-individually fair under $\dX$.
\end{theorem}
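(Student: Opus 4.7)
The plan is to reduce the statement to a pointwise concentration bound on each empirical probability $\sampmodel(x_i)[y]$ and then glue the estimates together with the triangle inequality for $\ddY$. Since $\ddY$ is itself a metric on $\Delta(\Y)$, for any $x_1, x_2 \in \X$
\[
\ddY(\sampmodel(x_1), \sampmodel(x_2)) \le \ddY(\smodel(x_1), \smodel(x_2)) + \ddY(\sampmodel(x_1), \smodel(x_1)) + \ddY(\sampmodel(x_2), \smodel(x_2)),
\]
and the first term on the right is at most $\dX(x_1, x_2)$ by the hypothesis that $\smodel$ is individually fair under $\dX$. Hence it suffices to argue that the last two terms sum to at most $\epsilon$ with the claimed probability.

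The core observation is that, for each fixed $y \in \Y$, $\sampmodel(x_i)[y]$ is the empirical mean of $n$ i.i.d.\ Bernoulli variables $\one[f(x_i+\t_j)=y]$ whose true mean is $\smodel(x_i)[y]$. Hoeffding's inequality then yields, for any chosen slack $t > 0$,
\[
\Pr\left[\,|\sampmodel(x_i)[y] - \smodel(x_i)[y]| \ge t\,\right] \le 2 e^{-2nt^2}.
\]
A union bound over the $m = |\Y|$ categories (and over $i \in \{1, 2\}$) then controls all per-category deviations simultaneously. Choosing $t$ on the order of $\epsilon/m$ so that $\sum_{y} |\sampmodel(x_i)[y] - \smodel(x_i)[y]| \le \epsilon$ gives $\ddY(\sampmodel(x_i), \smodel(x_i)) \le \epsilon/2$, which is exactly what is needed.

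The main obstacle is calibrating the per-category slack $t$ and the union-bound weighting to match the precise constants in $\delta = 2me^{-4n\epsilon^2/m^2}$. A naive triangle-inequality argument that handles each query point separately introduces extra multiplicative factors; the sharper exponent can be recovered by applying Hoeffding directly to the combined per-coordinate deviation $\sampmodel(x_1)[y] - \sampmodel(x_2)[y] - (\smodel(x_1)[y] - \smodel(x_2)[y])$, which is an average of $n$ i.i.d.\ bounded random variables by virtue of the independence of the two noise samples. Either way, the structural content of the proof---Hoeffding per category, union bound over categories, triangle inequality for $\ddY$---is entirely routine once the reduction is in place.
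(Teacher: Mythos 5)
Your proposal matches the paper's proof: fix two points, reduce via a coordinatewise triangle inequality, apply Hoeffding per class, and union-bound over the $m$ classes, and your closing remark correctly identifies that the tighter constant comes from applying Hoeffding to the combined deviation $\mathrm{diff}[y] = (\sampmodel(x_1)[y]-\sampmodel(x_2)[y]) - (\smodel(x_1)[y]-\smodel(x_2)[y])$ rather than to each $\ddY(\sampmodel(x_i),\smodel(x_i))$ separately, which is exactly the paper's route. The only detail you leave implicit is that the claimed exponent $4n\epsilon^2/m^2$ requires viewing $\mathrm{diff}[y]$ as a sum over the $2n$ independent unit-range indicators $X_{1j}, X_{2j}$ (the paper signals this by the rescaling to $\tfrac{1}{2n}\sum_j$), not as an average of $n$ range-$2$ differences, which would only yield $2n\epsilon^2/m^2$.
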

\begin{proof}
Consider any two points $\x_1, \x_2 \in \R^d$.
Since $\smodel$ is individually fair under $\dX$, we have
\begin{multline*}
\textstyle \frac{1}{2} \sum_{y \in \Y} |\smodel(\x_1)[y] - \smodel(\x_2)[y]| \\
= \ddY(\smodel(\x_1), \smodel(\x_2))
\le \dX(\x_1, \x_2).
\end{multline*}
We will show that $|\diff[y]| > 2\epsilon/m$ with probability less than $2e^{-4n\epsilon^2/m^2}$, where $\diff[y] = (\sampmodel(\x_1)[y] - \sampmodel(\x_2)[y]) - (\smodel(\x_1)[y] - \smodel(\x_2)[y])$.
Then, by union bound, with probability at least $1 - 2me^{-4n\epsilon^2/m^2}$ we will have $\diff[y] > 2\epsilon/m$ for all $y \in \Y$, which leads to our desired result.
\begin{align*}
&\ddY(\sampmodel(\x_1), \sampmodel(\x_2)) \\
&\textstyle \le \frac{1}{2} \sum_{y \in \Y} |\smodel(\x_1)[y] - \smodel(\x_2)[y]| + \frac{1}{2} \sum_{y \in \Y} |\diff[y]| \\
&\le \dX(x_1, x_2) + \epsilon
\end{align*}

Fix $y \in \Y$, and let $X_{ij} = \one[f(\x_i + \t_{ij}) = y]$, where $\t_{ij}$ is the $j$-th sample drawn from the smoothing distribution $g$ while evaluating $\sampmodel(\x_i)$.
Then, $\sampmodel(\x_i)[y] = \frac{1}{n} \sum_{j=1}^n X_{ij}$ and $\smodel(\x_i)[y] = \frac{1}{n} \E[\sum_{j=1}^n X_{ij}]$, so $\diff = \frac{1}{n} \sum_{j=1}^n (X_{1j} - X_{2j} - \E[X_{1j} - X_{2j}])$.
The theorem follows from Hoeffding's inequality.
\begin{align*}
&\Pr[|\diff[y]| > 2\epsilon/m] \\
&= \textstyle \Pr[|\frac{1}{2n} \sum_{j=1}^n (X_{1j} - X_{2j} - \E[X_{1j} - X_{2j}])| > \epsilon/m] \\
&< 2e^{-4n\epsilon^2/m^2} \qedhere
\end{align*}
\end{proof}

\ifarxiv

\subsection{Noise Sampling}
Implementations of Gaussian noise sampling are commonly included in data analysis libraries.
For Laplace noise sampling, we apply \cref{alg:sampling}, which describes how to sample a point $\t$ from the Laplace smoothing distribution when $p \in \{1, 2, \infty\}$.
Without loss of generality, we assume that $\dX$ is a standard $\Lp$ metric since we can simply rescale each coordinate by its weight $w_i$.
Recall that $g(\t) \propto \exp(-2\dX(\bm{0}, \t))$.
When $p = 1$, this quantity becomes $\exp(-2\sum_{i=1}^d w_i \cdot |t_i|) = \prod_{i=1}^d \exp(-2w_i \cdot |t_i|)$, so each coordinate can be sampled from the Laplace distribution independently of the others.
For other values of $p$, the coordinates are not independent, so we instead sample the distance $r = \dX(\bm{0}, \t) = \|\t\|_p$ and then pick a point $\t$ uniformly at random on the sphere ($p = 2$) or hypercube ($p = \infty$) of radius $r$.

To sample $r$, we note that the set $\{\t \mid \|\t\|_p = r\}$ has surface area proportional to $r^{d-1}$.
Hence, the probability of drawing a point in this set from the distribution $g$ is proportional to $r^{d-1} e^{-2r} \, dr$, and the cumulative distribution function of $r$ is $P(d, 2r)$, where $P$ is the regularized lower incomplete gamma function.
Finally, computing the inverse of this function allows us to sample $r$ through inverse transform sampling.

\begin{algorithm}[t] 
\begin{algorithmic}
\REQUIRE Positive integer $d$, parameters $p \in \{1, 2, \infty\}$, $\bm{w} \in \R_+^d$ for the weighted $\Lp$ metric $\dX$
\ENSURE Noise $\t \in \R^d$ drawn randomly from distribution $g$ such that $g(\t) \propto \exp(-2\dX(\bm{0}, \t))$

\vspace{1ex}

\STATE //Sample noise under the standard $\Lp$ metric.
\IF{$p = 1$}
	\STATE //Each coordinate is independent when $p = 1$.
	\FOR{$i = 1, \ldots, d$}
		\STATE $t_i \sim \mathrm{Laplace}(0, 0.5)$
	\ENDFOR
\ELSE
	\STATE //Sample a random point on the unit $\Lp$-sphere.
	\FOR {$i = 1, \ldots, d$}
		\IF {$p = 2$}
			\STATE $t_i \sim \mathrm{Gaussian}(0, 1)$
		\ELSIF{$p = \infty$}
			\STATE $t_i \sim \mathrm{Uniform}(-1, 1)$
		\ENDIF
	\ENDFOR
	\STATE $\t \gets \t / \|\t\|_p$
	
	\STATE //Use inverse transform sampling for radius $r = \|\t\|_p$.
	\STATE $u \sim \mathrm{Uniform}(0, 1)$
	\STATE $r \gets P^{-1}(d, u) / 2$ \COMMENT{$P$ is reg.\ lower inc.\ gamma func.}
	\STATE $\t \gets r \cdot \t$
\ENDIF

\STATE //Adjust the noise, taking the weights into account.
\FOR {$i = 1, \ldots, d$}
	\STATE $t_i \gets t_i / w_i$
\ENDFOR
\RETURN $(t_1, \ldots, t_d)$
\end{algorithmic}
\caption{Laplace noise sampling} \label{alg:sampling}
\end{algorithm}
\else
In the extended version of this paper~\cite{yeom2020individual}, we provide pseudocode of an implementation of randomized smoothing.
\fi

\section{Experiments} \label{sec:experiments}
\cref{thm:laplacefair,thm:gaussianfair,thm:smoothingapprox} show that smoothed models created using
\ifarxiv
\cref{alg:smoothing}
\else
randomized smoothing
\fi
are individually fair, but we have no similar results about their utility except heuristic arguments from minimality.
In this section we measure the utility of smoothed models $\smodel$ on four real-world datasets (detailed below), using the smoothing distributions described in \cref{thm:laplacefair,thm:gaussianfair}.

The weights of $\dX$ were chosen to be proportional to those of a logistic regression model trained on the same dataset, so the features that receive little weight in the linear model, which are thus less likely to be predictive, have little effect on the output of the smoothed model.
The linear model weights were multiplied by a constant between 0.5 and 5 (depending on the dataset) to make the mean weight of $\dX$ equal 1.
For each dataset, we trained a neural network $f$ with two dense hidden layers of 128 ReLU neurons each, as well as a logistic regression model to use for deriving the targeted metric.
When training neural networks, we augmented training data with noise drawn from the smoothing distribution, as prior work~\cite{cohen2019certified} shows that this improves the utility of the smoothed model.
For smoothing, we sampled $n = 10^5$ points, which by \cref{thm:smoothingapprox} corresponds to a guarantee of $\delta = 1.8 \times 10^{-4}$ at $\epsilon = 10^{-2}$.

\begin{figure}
\resizebox{\columnwidth}{!}{\includegraphics{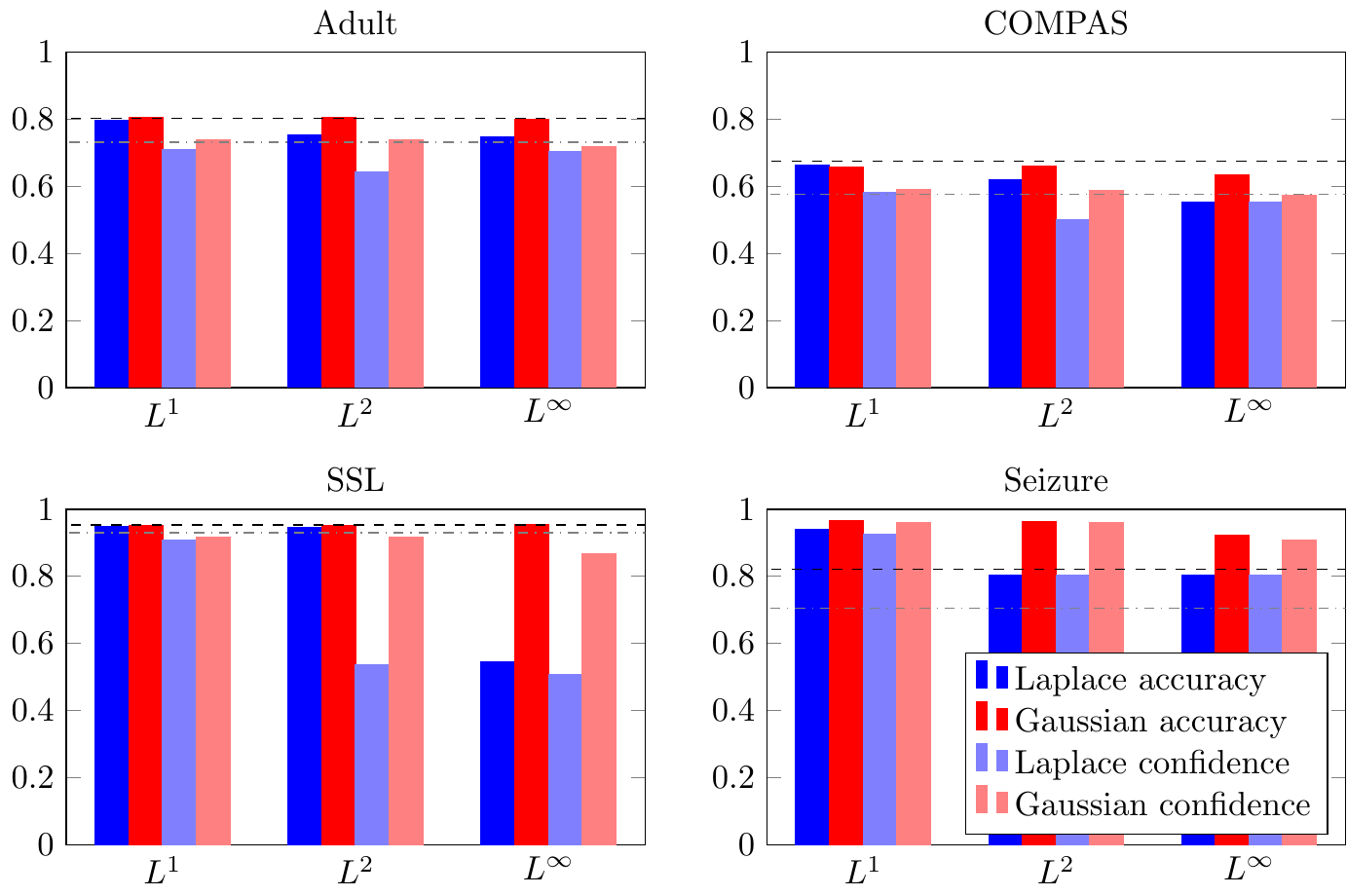}}
\caption{Utility of smoothed models derived from the four datasets described in \cref{sec:experiments}.
The black dashed line indicates the accuracy of the logistic regression model, and the dash-dotted line its average probit confidence.
Because smoothed models output probabilities, accuracy and mean confidence are both reasonable measures of their utility, but only mean confidence preserves the individual fairness of the smoothed model.
}
\label{fig:plot}
\end{figure}

\paragraph{Adult.}
Our model uses the five numerical features from the UCI Adult dataset~\cite{uci} to predict whether a person earns more than \$50,000 per year.

\paragraph{COMPAS.}
We use the dataset compiled by ProPublica~\cite{angwin2016machine} to analyze the COMPAS recidivism prediction model~\cite{compas}.
Our model uses eight features (15 when one-hot encoded) to predict whether a person will recidivate within the next two years.

\paragraph{SSL.}
The Strategic Subject List dataset~\cite{ssl} contains scores given by Chicago Police Department's model to rate a person's risk of being involved in a shooting incident, either as a perpetrator or a victim.
Our model uses the same eight numerical features used by Chicago's model to predict a person's SSL risk score.

\paragraph{Seizure.}
In the UCI Epileptic Seizure dataset~\cite{uci,andrzejak2001indications}, every row consists of 178 readings from an EEG taken over a second.
Our model predicts whether a person is experiencing a seizure during that second.

\subsection{Results}
We applied both Laplace and Gaussian smoothing to create models that are individually fair under the weighted $L^1$, $L^2$, or $L^\infty$ metrics, with weights derived from those of the corresponding logistic regression model as previously described.
Because the outputs of the smoothed models are probabilities, we measured their utilities both in terms of standard accuracy and the mean probit confidence assigned to the correct class, $\E_{(\x, y)}[\smodel(\x)[y]]$.
Although accuracy is a more common measure of utility, its use of the thresholding operator $\argmax$ is incompatible with the individual fairness of $\smodel$.

The results in \cref{fig:plot} show that Gaussian-smoothed models \emph{approximately match or exceed the performance of the logistic model while achieving similar individual fairness guarantees.}
We note that for all datasets except for Seizure, the accuracy of the unsmoothed neural network was within 1.5\% of the logistic regression model; on Seizure, the neural network achieved 96.6\% whereas the logistic model gave 82.2\% accuracy.
The Gaussian-smoothed Seizure model came very close ($\le 0.2\%$) to the unsmoothed neural network for $L^1$ and $L^2$ metrics, far exceeding the performance of the linear model.

We conclude by noting that, although $L^1$ metrics are minimal with respect to Laplace smoothing, Gaussian smooothing outperforms on these metrics in practice.
Laplace distributions have higher densities at the tails, resulting in more queries that are very dissimilar to the input point $\x$.
Thus, in practice it can be preferable to use Gaussian smoothing for every $\Lp$ metric, adjusting the weights as shown in \cref{sec:gaussian} to account for the value of $p$.

\section*{Acknowledgments}
This material is based upon work supported by Bosch Corporation, an NVIDIA GPU grant, and the National Science Foundation under Grant No.\ CNS-1704845.
The authors would like to thank Shayak Sen for his helpful feedback.

\bibliographystyle{named}
\bibliography{biblio}

\begin{thebibliography}{}

\bibitem[\protect\citeauthoryear{Andrzejak \bgroup \em et al.\egroup
  }{2001}]{andrzejak2001indications}
Ralph~G Andrzejak, Klaus Lehnertz, Florian Mormann, Christoph Rieke, Peter
  David, and Christian~E Elger.
\newblock Indications of nonlinear deterministic and finite-dimensional
  structures in time series of brain electrical activity: Dependence on
  recording region and brain state.
\newblock {\em Physical Review E}, 64(6):061907, 2001.

\bibitem[\protect\citeauthoryear{Angwin \bgroup \em et al.\egroup
  }{2016}]{angwin2016machine}
Julia Angwin, Jeff Larson, Surya Mattu, and Lauren Kirchner.
\newblock Machine bias: There's software used across the country to predict
  future criminals. and it's biased against blacks.
\newblock {\em ProPublica}, 2016.

\bibitem[\protect\citeauthoryear{Calmon \bgroup \em et al.\egroup
  }{2017}]{calmon2017optimized}
Flavio Calmon, Dennis Wei, Bhanukiran Vinzamuri, Karthikeyan~Natesan
  Ramamurthy, and Kush~R Varshney.
\newblock Optimized pre-processing for discrimination prevention.
\newblock In {\em Advances in Neural Information Processing Systems}, pages
  3992--4001, 2017.

\bibitem[\protect\citeauthoryear{Canetti \bgroup \em et al.\egroup
  }{2019}]{canetti2019soft}
Ran Canetti, Aloni Cohen, Nishanth Dikkala, Govind Ramnarayan, Sarah Scheffler,
  and Adam Smith.
\newblock From soft classifiers to hard decisions: How fair can we be?
\newblock In {\em ACM Conference on Fairness, Accountability, and
  Transparency}, pages 309--318, 2019.

\bibitem[\protect\citeauthoryear{Chouldechova and
  Roth}{2018}]{chouldechova2018frontiers}
Alexandra Chouldechova and Aaron Roth.
\newblock The frontiers of fairness in machine learning.
\newblock {\em arXiv preprint arXiv:1810.08810}, 2018.

\bibitem[\protect\citeauthoryear{{City of Chicago}}{2017}]{ssl}
{City of Chicago}.
\newblock {S}trategic {S}ubject {L}ist.
\newblock
  \url{https://data.cityofchicago.org/Public-Safety/Strategic-Subject-List/4aki-r3np},
  2017.

\bibitem[\protect\citeauthoryear{Cohen \bgroup \em et al.\egroup
  }{2019}]{cohen2019certified}
Jeremy Cohen, Elan Rosenfeld, and Zico Kolter.
\newblock Certified adversarial robustness via randomized smoothing.
\newblock In {\em International Conference on Machine Learning}, pages
  1310--1320, 2019.

\bibitem[\protect\citeauthoryear{Dua and Karra~Taniskidou}{2017}]{uci}
Dheeru Dua and Efi Karra~Taniskidou.
\newblock {UCI} machine learning repository.
\newblock \url{https://archive.ics.uci.edu/ml}, 2017.

\bibitem[\protect\citeauthoryear{Dwork \bgroup \em et al.\egroup
  }{2012}]{dwork2012fairness}
Cynthia Dwork, Moritz Hardt, Toniann Pitassi, Omer Reingold, and Richard Zemel.
\newblock Fairness through awareness.
\newblock In {\em Innovations in Theoretical Computer Science}, pages 214--226,
  2012.

\bibitem[\protect\citeauthoryear{Equivant}{2019}]{compas}
Equivant.
\newblock Practitioner's guide to {COMPAS} core.
\newblock
  \url{http://www.equivant.com/wp-content/uploads/Practitioners-Guide-to-COMPAS-Core-040419.pdf},
  2019.

\bibitem[\protect\citeauthoryear{Gillen \bgroup \em et al.\egroup
  }{2018}]{gillen2018online}
Stephen Gillen, Christopher Jung, Michael Kearns, and Aaron Roth.
\newblock Online learning with an unknown fairness metric.
\newblock In {\em Advances in Neural Information Processing Systems}, pages
  2600--2609, 2018.

\bibitem[\protect\citeauthoryear{Goodfellow \bgroup \em et al.\egroup
  }{2015}]{goodfellow2015explaining}
Ian~J Goodfellow, Jonathon Shlens, and Christian Szegedy.
\newblock Explaining and harnessing adversarial examples.
\newblock In {\em International Conference on Learning Representations}, 2015.

\bibitem[\protect\citeauthoryear{Hardt \bgroup \em et al.\egroup
  }{2016}]{hardt2016equality}
Moritz Hardt, Eric Price, and Nati Srebro.
\newblock Equality of opportunity in supervised learning.
\newblock In {\em Advances in Neural Information Processing Systems}, pages
  3315--3323, 2016.

\bibitem[\protect\citeauthoryear{Ilvento}{2019}]{ilvento2019metric}
Christina Ilvento.
\newblock Metric learning for individual fairness.
\newblock {\em arXiv preprint arXiv:1906.00250}, 2019.

\bibitem[\protect\citeauthoryear{Jung \bgroup \em et al.\egroup
  }{2019}]{jung2019eliciting}
Christopher Jung, Michael Kearns, Seth Neel, Aaron Roth, Logan Stapleton, and
  Zhiwei~Steven Wu.
\newblock Eliciting and enforcing subjective individual fairness.
\newblock {\em arXiv preprint arXiv:1905.10660}, 2019.

\bibitem[\protect\citeauthoryear{Kairouz \bgroup \em et al.\egroup
  }{2016}]{kairouz2016extremal}
Peter Kairouz, Sewoong Oh, and Pramod Viswanath.
\newblock Extremal mechanisms for local differential privacy.
\newblock {\em Journal of Machine Learning Research}, 17(1):492--542, 2016.

\bibitem[\protect\citeauthoryear{Lohia \bgroup \em et al.\egroup
  }{2019}]{lohia2019bias}
Pranay~K Lohia, Karthikeyan~Natesan Ramamurthy, Manish Bhide, Diptikalyan Saha,
  Kush~R Varshney, and Ruchir Puri.
\newblock Bias mitigation post-processing for individual and group fairness.
\newblock In {\em IEEE International Conference on Acoustics, Speech and Signal
  Processing}, pages 2847--2851, 2019.

\bibitem[\protect\citeauthoryear{Madras \bgroup \em et al.\egroup
  }{2018}]{madras2018learning}
David Madras, Elliot Creager, Toniann Pitassi, and Richard Zemel.
\newblock Learning adversarially fair and transferable representations.
\newblock In {\em International Conference on Machine Learning}, pages
  3381--3390, 2018.

\bibitem[\protect\citeauthoryear{Neyman and Pearson}{1933}]{neyman1933problem}
Jerzy Neyman and Egon~Sharpe Pearson.
\newblock On the problem of the most efficient tests of statistical hypotheses.
\newblock {\em Philosophical Transactions of the Royal Society of London},
  231(694--706):289--337, 1933.

\bibitem[\protect\citeauthoryear{Szegedy \bgroup \em et al.\egroup
  }{2014}]{szegedy2014intriguing}
Christian Szegedy, Wojciech Zaremba, Ilya Sutskever, Joan Bruna, Dumitru Erhan,
  Ian Goodfellow, and Rob Fergus.
\newblock Intriguing properties of neural networks.
\newblock In {\em International Conference on Learning Representations}, 2014.

\bibitem[\protect\citeauthoryear{Tan \bgroup \em et al.\egroup
  }{2019}]{tan2019learning}
Zilong Tan, Samuel Yeom, Matt Fredrikson, and Ameet Talwalkar.
\newblock Learning fair representations for kernel models.
\newblock {\em arXiv preprint arXiv:1906.11813}, 2019.

\bibitem[\protect\citeauthoryear{Zafar \bgroup \em et al.\egroup
  }{2017}]{zafar2017fairness-www}
Muhammad~Bilal Zafar, Isabel Valera, Manuel Gomez~Rodriguez, and Krishna~P
  Gummadi.
\newblock Fairness beyond disparate treatment \& disparate impact: Learning
  classification without disparate mistreatment.
\newblock In {\em International Conference on World Wide Web}, pages
  1171--1180, 2017.

\bibitem[\protect\citeauthoryear{Zemel \bgroup \em et al.\egroup
  }{2013}]{zemel2013learning}
Rich Zemel, Yu~Wu, Kevin Swersky, Toni Pitassi, and Cynthia Dwork.
\newblock Learning fair representations.
\newblock In {\em International Conference on Machine Learning}, pages
  325--333, 2013.

\end{thebibliography}
\end{document}